\newtheorem{thm}{Theorem}
\begin{document}
\title{Towards Robust Neural Networks with \\ Lipschitz Continuity}

\titlerunning{Towards Robust Neural Networks with Lipschitz Continuity}
%
\author{Muhammad Usama \and Dong Eui Chang \Letter}

%
\authorrunning{M. Usama and D. E. Chang}
%
\institute{
School of Electrical Engineering \\
Korea Advanced Institute of Science and Technology \\
Daejeon, Republic of Korea \\
\email{\{usama,dechang\}@kaist.ac.kr}}

\maketitle              
\begin{abstract}
Deep neural networks have shown remarkable performance across a wide range of vision-based tasks, particularly due to the availability of large-scale datasets for training and better architectures. However, data seen in the real world are often affected by distortions that not accounted for by the training datasets. In this paper, we address the challenge of robustness and stability of neural networks and propose a general training method that can be used to make the existing neural network architectures more robust and stable to input visual perturbations while using only available datasets for training. Proposed training method is convenient to use as it does not require data augmentation or changes in the network architecture. We provide theoretical proof as well as empirical evidence for the efficiency of the proposed training method by performing experiments with existing neural network architectures and demonstrate that same architecture when trained with the proposed training method perform better than when trained with conventional training approach in the presence of noisy datasets.

\keywords{deep neural networks  \and robust neural networks \and Lipschitz continuity.}
\end{abstract}
\section{Introduction}
\label{Introduction}
Recent advances in deep learning have immensely increased the representational capabilities of the neural networks and made them powerful enough to be applied to different vision-based tasks including image classification \cite{2,3,4,22}, object detection \cite{5,6}, image captioning \cite{31} as well as to deep reinforcement learning \cite{8,10}. Some important factors that explain the rapid development of deep learning include emergence of dedicated mathematical frameworks for deep neural networks \cite{mathematical_framework}, availability of large scale annotated datasets \cite{15,16}, improvements in the network architectures \cite{4,17} and open source deep learning libraries \cite{13,14}.

\begin{table*}[t]
	\caption{Effect of input image quality on the deep learning model prediction. We trained resnet-20 architecture with standard and proposed training procedure and tested them on a CIFAR-10 dataset image. Model trained with standard method fails to correctly classify the image as the severity of distortion increases while that trained with proposed method correctly classifies all images with high confidence. }
	\label{paperSummary}
	\center
	\begin{tabular}{l p{2cm} c c c c}
	\toprule

	\multicolumn{2}{c}{Gaussian Noise std} & $\sigma=0.0$ & $\sigma=0.2$ & $\sigma=0.4$	& $  	\sigma=0.6$\\

	\midrule

	\multicolumn{2}{c}{Input to the model} & \includegraphics[width=1.7cm]{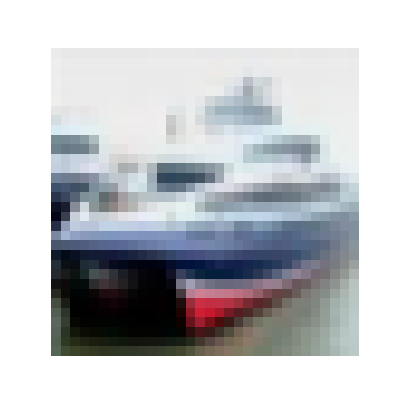} & 		    \includegraphics[width=1.7cm]{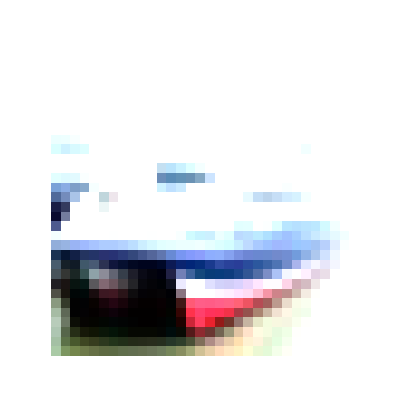} & \includegraphics[width=1.7cm]{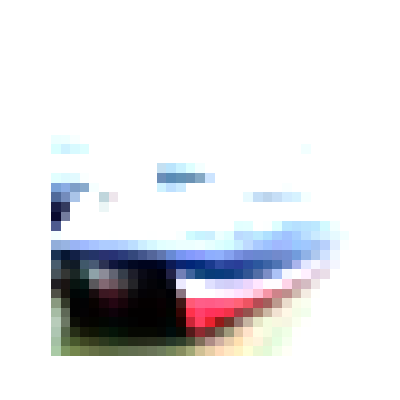} &     	    \includegraphics[width=1.7cm]{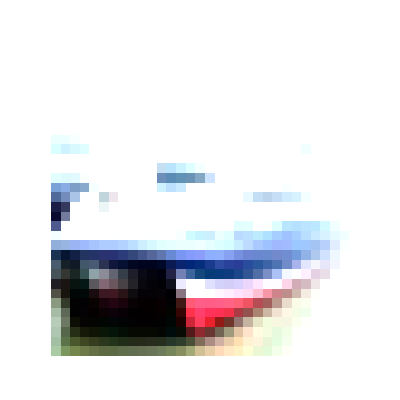}  \\

	\midrule
	
	\multirow{2}[8]{*}{standard training}& Model Prediction & ship & ship & bird & bird\\
	\cmidrule(l){2-6}
	& confidence for `ship' & 0.9999 & 0.5608 & 0.1266 & 0.0252\\
	
	\midrule
	
	\multirow{2}[8]{*}{proposed training}& Model Prediction & ship & ship & ship & ship \\ \cmidrule(l){2-6} & confidence        for `ship' & 0.9999 & 0.9986 & 0.8710 & 0.7215\\
	
	\bottomrule
	\end{tabular}
\end{table*}

Availability of large amounts of high-quality and distortionless image data is often assumed and the visual quality of training images is often overlooked while designing deep learning based applications. It has been shown that models trained with \textit{clean} data suffer with depreciation in their performance when tested on samples that are distorted with blur or noise distortions \cite{18,19}. In most real-world applications, the images undergo various forms of distortions owing to formatting, compression and post-processing that are routinely applied to visual datasets and often are unobservable to a human eye. Therefore, the availability of clean data is no longer guaranteed. One way to alleviate this problem can be to train the networks with noisy data expected to be seen in the real-world. However, the commonly used large scale datasets \cite{15,16} for training the deep learning models do not provide training data with these artifacts and distortions. Therefore, it is imperative to develop training techniques that can give more robust deep learning models while using only available large scale popular datasets that do not cater for these distortions.

The problem discussed in this work is about improving the robustness and stability of deep neural networks. This is a fundamental problem in computer vision and has recently received increased interest by the community \cite{21,26,27}. Our focus is on improving the training process rather than the DNN architecture. We introduce a general training technique that can be applied to any standard state-of-the-art deep learning model and lets them learn a mapping that is more robust and insensitive to input visual perturbations and distortions. We note that a deep neural network can be considered as a mathematical model and the least we can expect from a stable mathematical model is that a small perturbation or distortion in its input will not produce a large change in its behavior. In order to realize this, we utilize some fundamental concepts including Lipschitz functions and Lipschitz continuity. According to the perturbation theory, if the input is perturbed by a small amount, the output of the system stays \textit{close} to its nominal output when there is no perturbation in the input provided that the system dynamics are continuous and locally Lipschitz. In order to motivate the dynamics of the deep neural network to remain locally Lipschitz, we include an additional term in the loss function called $L_{Lipschitz}$.
We provide theoretical justification for the proposed training method in Section \ref{approach}, proving that for admissible distortions in the neighborhood of input image, the Locally Lipschitz neural network is guaranteed to be stable, thus improving the performance in presence of noisy data. We verify the theoretical results by performing extensive experiments on MNIST, CIFAR-10 and STL-10 datasets.

We summarize the paper findings in Table \ref{paperSummary} where resnet-20 network architecture, trained without the proposed method, when presented with distorted input images fails to classify them as the severity of the distortion increases. Even for correctly classified distorted images, the prediction confidence is very low. On the other hand, the same architecture trained with the proposed method when presented with same distorted images correctly classifies them with reasonable prediction confidence.

\section{Related Work}
\label{related_work}
While training the deep neural networks, availability of high quality and artifact-free image data is often assumed. However, this may not always be true due to distortions the images encounter during accusation, transmission and storage phases. Moreover, with the increasing demand of DNN based mobile applications, the assumption for high quality of the availability of high quality input data needs to be relaxed. \cite{18,19} showed that the deep neural networks trained on clean datasets are all susceptible to poor performance when tested to blur and noise distortions while being resilient to compression artifacts such as JPEG and contrast. They propose to train the networks on low quality data to alleviate this problem, which may cause networks to perform poorly to high quality data. The VGG \cite{21} architecture was shown to perform better than AlexNet \cite{22} or GoogleNet \cite{2} to the considered types of distortions.
\cite{2} showed that standard architectures trained on high-quality data suffered significant degradation in performance when tested with distorted data due to blurring or camera motion. They showed that fine-tuning the trained models with a mix of blurry and sharp training examples helps to regain the lost performance to a degree at the cost of minor computational overhead. \cite{25} proposed two approaches to alleviate poor performance due to blurred and noisy images: re-training and fine-tuning with noisy images, showing that fine-tuning is more practical than re-training. \cite{28} also shows that fine-tuned networks on distorted data outperform the original networks when tested on noisy data, but these fine-tuned networks show poor performance on quality distortions that they have not been trained for. \cite{28} propose the concept of mixture of experts ensemble, where various experts are trained on different types of distortions and the final output of the model is the weighted sum of these expert models' outputs. A separate gating network is used to determine these weights.
\cite{26} presents BANG which is the training algorithm that assigns more weight to the correctly classifies samples. Since the correctly classified training samples do not contribute much to the loss as compared to the incorrectly classified training samples, therefore, training is more focused on learning those samples that are badly classified. \cite{26} proved that increasing the contribution of correctly classified training samples in the batch helps flatten the decision space around these training samples, thus training more robust DNNs.
In addition to above mentioned issues, \cite{29} showed the inability of many machine learning models to deal with slightly, but intentionally, perturbed examples which are called adversarial examples. These adversarial examples are indistinguishable to human observers from their original counterparts. Authors in \cite{29} were first to introduce a method of finding adversarial perturbations while \cite{30} introduced a computationally cheaper adversarial example generation algorithm called Fast Gradient Sign Method (FGSM).
Our work differs drastically from \cite{27} as instead of flattening the neural network dynamics function $f$ altogether, we are more focused on setting a soft upper bound on the gradient of $f$ that does not adversely affects the representational power of the neural network. Our work also differs from data augmentation as we propose a way to improve the training process without using any extra training samples, while data augmentation uses standard training techniques and instead increases the number of training samples.

\section{Background}
\label{background}
In this section, we present the basic concepts of Lipschitz functions and Lipschitz continuity.


Let $S$ be an open set in some $\mathbb R^n$. A function $f: \mathbb R^n \rightarrow \mathbb R^m$ is called Lipschitz continuous on $S$ if 
 if there exists a nonnegative constant $L_f$ $\in \mathbb{R}_{\geq 0} $, called a Lipschitz constant of function $f$ on $S$, such that the following condition holds:
\begin{equation}
\| f(x) - f(y)\| \leq L_f \| x-y \| \label{eq:1}
\end{equation}
for all $x, y \in \mathbb S$. 
We call the function $f$ to be locally Lipschitz continuous if for each  $z \in \mathbb{R}^{n}$, there exists a constant $r$ such that $f$ in Lipschitz continuous on the open ball  $B_r(z)$ of center $z$ and radius $r$, where $B_r (z)$ is mathematically written as $B_r (z)= \left\{ y \in \mathbb{R}^{n} : \| y-z \| < r \right\}$.
The function $f$ is said to be globally Lipschitz continuous if it is Lipschitz continuous on its entire domain $\mathbb R^n$.  We note that if the function $f(x$) is Lipschitz continuous with a Lipschitz constant $L_f$, then it is also Lipschitz continuous with any $L$ such that $L \geq Lf$.

Lipschitz continuity is a measure designed to measure the change of the function values versus the change in the independent variable.  Let $f: \mathbb R^n \rightarrow \mathbb R^m$ be a Lipschitz continuous function with a Lipschitz constant $L_f$, so it satisfies  \eqref{eq:1}, i.e.
\begin{equation}\label{Lip2}
\frac{\| f(x) - f(y)\| }{\|x - y\|} \leq L_f
\end{equation}
for all $x \neq y \in \mathbb R^n$. In other words, the average rate of change in the value of $f$ for any pair of points $x$ and $y$ in $\mathbb R^n$ does not exceed the Lipschitz constant $L_f$.   Here we note that the Lipschitz constant $L_f$ depends upon the function $f$. It may vary from being large for one function to being small for another. If $L_f$ is small, then $f(x)$ may only vary a little as the input is changed. But if $L_f$ is large, the function output $f(x)$ may vary a lot with only a small change in its input $x$. In particular, when the Lipschitz function $f$ is real-valued, i.e. $m=1$, then by taking the limit of \eqref{Lip2} as $y \rightarrow x$ we obtain $\| f^\prime (x)\| \leq L_f$, where $f^\prime(x)$ is the derivative function of $f(x)$. In other words, the magnitude of (instantaneous) rate of change in $f$ does not exceed the Lipschitz constant $L_f$ when  the Lipschitz continuous function $f$ is differentiable.

Lipschitz continuity, therefore, quantifies the idea of sensitivity of the function $f(x)$ with respect to its argument using the Lipschitz constant $L_f$.  We note here that the Lipschitz constant $L_f$ represents only the upper bound on how much the function $f(x)$ can change with the change in its input, the actual change might also be smaller than that indicated by $L_f$.

\section{Approach}
\label{approach}
Neural networks can be considered as a sequence of layers that attempt to learn the arbitrary mapping $f \colon X \rightarrow Y$. The network is parameterized with many parameters that are optimized given the training data $x \in X$ and $y \in Y$. Therefore, imposing the condition of Lipschitz continuity on the neural network dynamics implies that a small perturbation in the input will not result in large change at the output of the network, thus increasing the robustness and the stability of the network. Theoretical justification for our approach is provided in the following theorem.

\begin{thm}
Let $\Lambda=\{ y_1, y_2, \dots , y_l\}$ be the set of $ l $ labels used and let $ \rho = 1/2 \min\limits_{1 \leq i < j \leq l} \|y_i-y_j\| $ be half of the minimum distance between any two labels. Let  $ f(x) $ be the neural network dynamics. Let $ L_n $ be the chosen Lipschitz constant hyperparameter. If $ f(x) $ is Lipschitz, then for all distortions $ d $ in input space such that $ \|d\| < \rho/L_n $, $ x $ and $ \tilde{x} $ are guaranteed to be mapped to the same label where $ \tilde{x} $ is the distorted input of the form $ x+d $.
\end{thm}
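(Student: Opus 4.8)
The plan is to exploit the Lipschitz bound on $f$ to control how far the output can move under the distortion $d$, and then argue that this displacement is too small to push the output out of the decision region associated with the label of $x$. First I would make explicit the classification rule that the statement leaves implicit: an input $z$ is assigned the label $y \in \Lambda$ that minimizes $\|f(z) - y\|$, i.e.\ nearest-label decoding in the output space, and I would take the clean input $x$ to be mapped exactly onto its label, so that $f(x) = y_i$ for some $i$.

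Given this, the first concrete step is the output-perturbation estimate. Since $f$ is Lipschitz with constant $L_n$ and $\tilde{x} = x + d$ with $\|d\| < \rho/L_n$, inequality~\eqref{eq:1} gives
\begin{equation*}
\|f(\tilde{x}) - f(x)\| \leq L_n \|\tilde{x} - x\| = L_n \|d\| < \rho,
\end{equation*}
so the distorted output stays within an open ball of radius $\rho$ about $y_i = f(x)$.

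The second step is to show that this ball lies entirely inside the decision region of $y_i$. For any other label $y_j$ with $j \neq i$, the definition of $\rho$ yields $\|y_i - y_j\| \geq 2\rho$, and the reverse triangle inequality then gives
\begin{equation*}
\|f(\tilde{x}) - y_j\| \geq \|y_i - y_j\| - \|f(\tilde{x}) - y_i\| > 2\rho - \rho = \rho.
\end{equation*}
Combining this with $\|f(\tilde{x}) - y_i\| < \rho$ shows that $f(\tilde{x})$ is strictly closer to $y_i$ than to every other label, so $\tilde{x}$ is decoded to $y_i$ as well, which is exactly the assertion.

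I expect the main obstacle to be not the computation but pinning down the hypotheses the statement leaves unstated. The conclusion is false without a specified decision rule and without the assumption that $x$ already sits at (or at least strictly inside the $\rho$-ball of) its label: if $f(x)$ lay near a boundary between two labels, an arbitrarily small $d$ could flip the decision. Hence the delicate part of the write-up will be to formulate the nearest-label rule and the condition $f(x) = y_i$ (or, more leniently, $\|f(x) - y_i\|$ small) cleanly, and to verify that the factor $1/2$ in the definition of $\rho$ is precisely what makes the radius $\rho/L_n$ the correct stability threshold.
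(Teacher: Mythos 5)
Your proof is correct, and its first half is exactly the paper's argument: the Lipschitz bound gives $\|f(\tilde{x})-f(x)\| \leq L_n\|d\| < \rho$ (the paper writes $L_n/\|d\|$ here, which is a typo for $L_n\|d\|$). Where you diverge is in how "mapped to the same label" is resolved. The paper simply declares $f$ to be \emph{discrete-valued} in $\Lambda$, so the conclusion is immediate: two values of $f$ lying in $\Lambda$ at distance less than $\rho < 2\rho$ must coincide, since distinct labels are at least $2\rho$ apart. No decision rule and no anchoring hypothesis are needed. You instead model $f$ as continuous-valued with a nearest-label decoding rule, add the hypothesis $f(x)=y_i$, and close the argument with the reverse triangle inequality. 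Each route buys something: the paper's is two lines, but its model is degenerate --- a Lipschitz (hence continuous) function from the connected input space into a finite set is necessarily constant, so under the paper's reading the theorem holds vacuously. Your formulation is the one that actually describes a neural classifier (continuous logits followed by decoding), it keeps the theorem non-trivial, and the extra hypothesis you flag is genuinely indispensable in that model: without $f(x)=y_i$ (or at least $f(x)$ strictly inside the $\rho$-ball about $y_i$), an output sitting near a decision boundary can be flipped by an arbitrarily small $d$, exactly as you say. So your "obstacle" paragraph is not pedantry; it identifies the hidden assumption that separates the paper's formally easy but vacuous statement from a meaningful one.
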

\begin{proof}
From the Lipschitz assumption, we have $ \|f(x+d)-f(x)\| \leq L_n/\|d\| $. Since we have $ \|d\| < \rho/L_n $, we get $ \|f(x+d)-f(x)\| < \rho $. Since $ f(x) $ is \textit{discrete-valued} in $ \Lambda $, taking into consideration the definition of $ \rho $, we conclude that both $ x $ and $ \tilde{x} $ get mapped to the same label in set $ \Lambda $.
\end{proof}
The Lipschitz property of $ f(x) $ guarantees that for any distortion $ d $ such that $ \|d\| < \rho / L_n $, the output of the distorted input lies within a sphere of radius $ \rho $ about the output of the nominal input where $ \rho $ gives the half of the maximum distance between any two labels. Thus, it is guaranteed that distorted input gets mapped to the same label as the nominal input. For the case when the network is trained without the proposed method, we do not impose any \textit{upper bound} on the slope of $ f(x) $. Therefore, we have Lipschitz constant $ L_n = \infty $ which in Theorem 1 gives $ \|d\|=0 $, which trivially implies that there is no distortion $ d $ for which the network is guaranteed to be robust.

\section{Method}
\label{Method}
Let $ \mathbb{R}^{H x W x C} \rightarrow R^l$, where $l$ denotes the number of labels, represents the mapping performed by the deep neural network. Let $x \in \mathbb{R}^{H x W x C}$ be the input that the network takes, for example an image in the case of a convolutional neural network. In order to encourage the network to be locally Lipschitz continuous, we perturb the network input during the training process with zero mean Gaussian Noise to get a perturbed copy of the input, $\bar{x}$, i.e.
\begin{equation}
\bar{x} = x + \mathit{N}(0,\sigma) \label{eq:4}  
\end{equation}
where we note that in \eqref{eq:4}, $\mathit{N}(0,\sigma)$ has same dimensions as the input image $x$ i.e. $\mathit{N}(0,\sigma) \in \mathbb{R}^{H x W x C}$ and each component of $\mathit{N}(0,\sigma)$ is a single valued zero mean Gaussian random variable $\mathcal{N}(0,\sigma)$ with standard deviation $\sigma$. Here, $\sigma$ is treated as a hyperparameter in the experiments.

In general, the derivative $f'(x)$ of a function $f(x)$ at a point a point $x$  is defined by
\[
f^\prime (x) = \lim_{y\rightarrow x} \frac{f(y) - f(x)}{y - x},
\]
it can be approximated by
\[
f'(x) \approx  \frac {f(y)-f(x)} {y-x}, 
\]
where $y$ is a point \textit{near} $x$. Hence, if we take $y = \bar{x} = x + \mathit{N}(0,\sigma)$
from \eqref{eq:4},  we then have
\begin{equation}
\|f'(x)\| \approx \frac {\|f(\bar{x})-f(x)\|} {\|\bar{x}-x\|} =: k(x). \label{eq:10}  
\end{equation}

In order to encourage the neural network to become locally Lipschitz continuous, we add an additional term, called $L_{Lipschitz}$, in the usual loss function, termed here as $L_{usual}$, to get an aggregated loss function $L$, i.e.
\[
L = L_{usual} + L_{Lipschitz}, 
\]
where $L_{usual}$ is the loss term corresponding to the task to be performed by the network, for example cross-entropy loss, while $L_{Lipschitz}$ is defined as:
\begin{equation}
L_{Lipschitz} = \beta * \max( 0 ,  k(x) - L_n) \label{eq:12}
\end{equation}
where $\beta$ is the weighting factor for the added loss term $L_{Lipschitz}$, $L_n$ serves the purpose of the Lipschitz constant for the neural network dynamics, and $k(x)$ is given in \eqref{eq:10}. We treat both $\beta$ and $L_n$ as hyperparameters.

The effect of the hyperparameters will be studied in Section \ref{Secsitivity_Analysis}.

\section{Experiments}
\label{experiments}
In order to evaluate our approach, we tested our proposed training procedure with MNIST \cite{20}, CIFAR-10 \cite{23} and STL-10 \cite{24} datasets. Details about these experiments and their results are explained in following subsections. When we train the network without using the proposed training method, we refer to the training method as \textit{standard training method}.

\paragraph{Justification for using Gaussian Noise:} In experiments, we use Gaussian noise to corrupt test data. To see why Gaussian model can approximate realistic distortions, we see that any distortion of an image $x$ can always be expressed as $\bar x = T_\sigma(x)$, where $T_\sigma(\cdot)$ is a map {\it close to the identity map}, i.e. $T_0(x) = x$,  parameterized by a parameter $\sigma$. Hence, for all small values of $\sigma$, $\bar x = T_\sigma(x) = T_0(x)  + O(|\sigma|) = x + O(|\sigma|)$ in Taylor expansion of $T_\sigma(x)$ in $\sigma$ around $\sigma = 0$, where $O(|\sigma|)$ represents the terms of order 1 or higher in $\sigma$ and can be interpreted as a perturbation term that vanishes when $\sigma=0$.  Hence, it is reasonable to use  Gaussian noises $\mathit{N}(0,\sigma)$ to simulate various realistic distortions to the image $x$.

Due to space constraints, some tables and figures are given in the supplementary material and will be referenced in the subsequent sections as required.
\begin{table}[t]
	\caption{Classification accuracies for experiments with MNIST. Results are shown for various levels of distortions in test dataset as described by the value of $\sigma_{test}$. Here $\sigma_{test}=0.0$ corresponds to undistorted test data. We used $\beta=10$ for MNIST experiments.}	
	\label{mnist-results-table}
	\center
	\begin{tabular}{l c c c}
	\toprule
	\multirow{2}{*}{Network Training Details} & \multicolumn{3}{c}{$\sigma_{test}$} \\ \cmidrule(l){2-4} &  $0.0$ & $0.5$ & $1.0$ \\
	\midrule
	standard method &	0.97 &	0.92 &	0.65	\\
	$\sigma_{train}=0.5$, $L_n=0.01$ &	0.98 &	0.95 &	0.70  \\
	$\sigma_{train}=0.75$, $L_n=0.005$ &	0.98 &	0.96 &	0.78  \\
	$\sigma_{train}=0.75$, $L_n=0.01$ &	0.98 &	0.96 &	0.77  \\
	\bottomrule
	\end{tabular}
\end{table}

\subsection{MNIST}
\label{mnist}
\subsubsection{Experiment Details}
\label{mnist-experiment-details}
We used a convolutional neural network consisting of one convolutional layer, one fully-connected layer and an output layer for experiments with MNIST dataset. 5 epochs of 550 iteration were performed and learning rate was set to $10^{-4}$. For training the network with standard training method, we set $\beta=0$ in \eqref{eq:12}. Network was trained with and without the proposed training mechanism. $\left(\sigma_{train},\beta,L_{n}\right) \in \{0.5,0.75\} \times \{10\} \times \{0.005,0.01\}$ were used as hyperparameters.

\begin{figure*}
\centering
\subfloat[Model trained with $\sigma_{train}=0.5, L_n=0.01$ and $\beta=10$]{\includegraphics[scale=0.5]{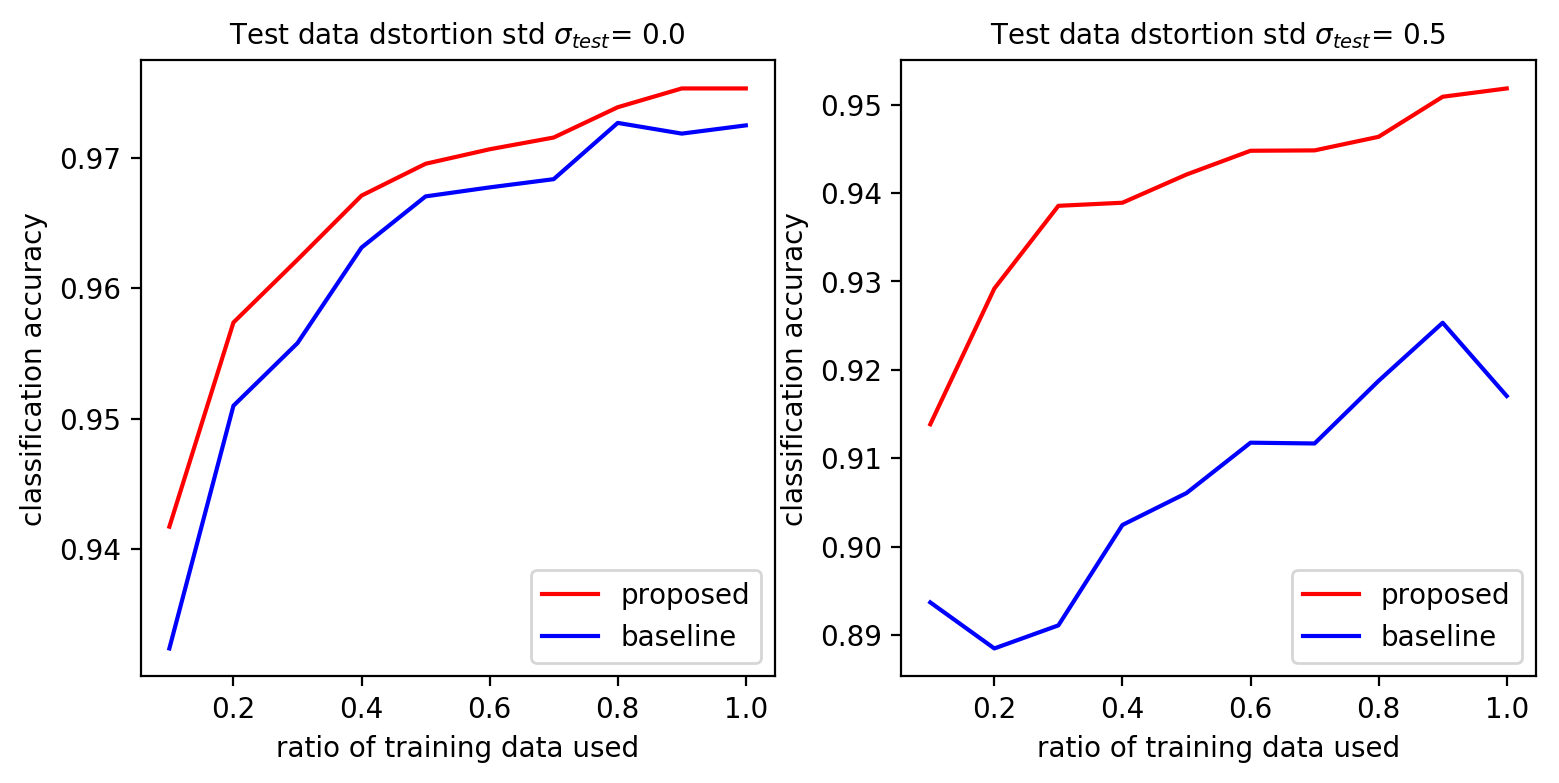}}

\subfloat[Model trained with $\sigma_{train}=0.75, L_n=0.005$ and $\beta=10$]{\includegraphics[scale=0.5]{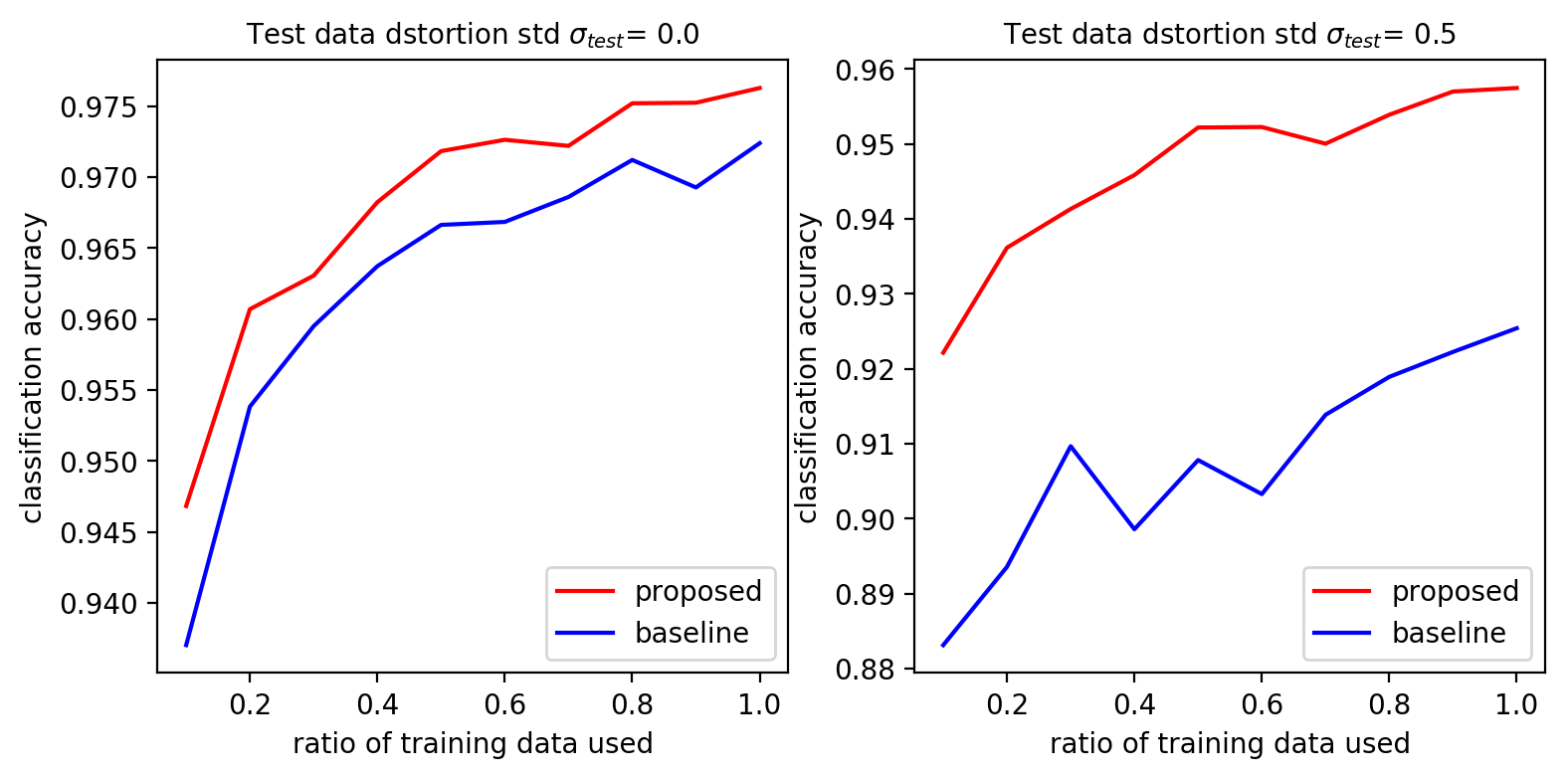}}

\subfloat[Model trained with $\sigma_{train}=0.75, L_n=0.01$ and $\beta=10$]{\includegraphics[scale=0.5]{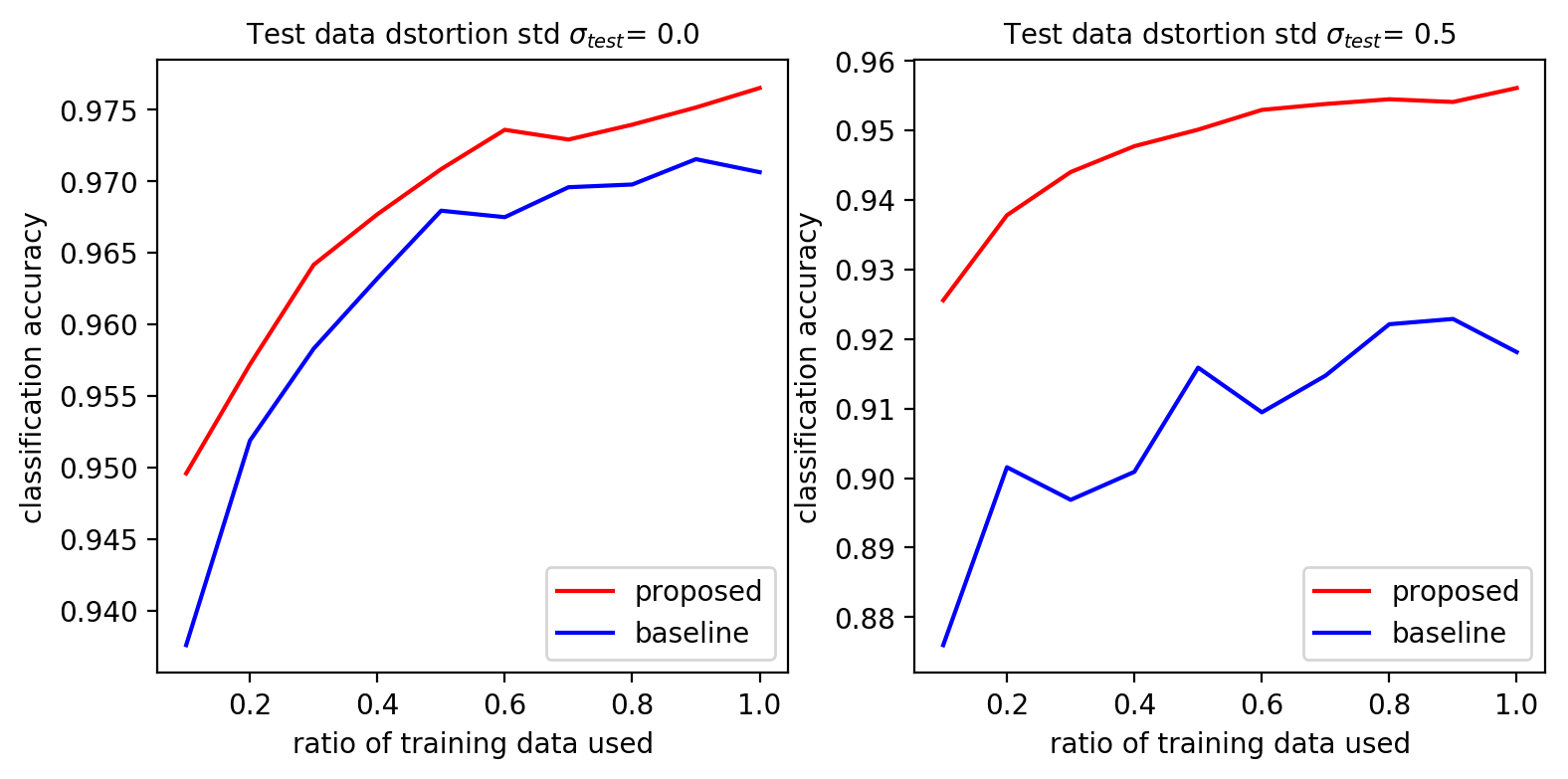}}

\caption{Plots of classification accuracy versus ratio of training data used in training process.}
\label{mnist-figure-ratios}
\end{figure*}

We tested trained networks with test data distorted with zero mean Gaussian noise with standard deviation values of $\sigma_{test}=0.0,0.5$ and $1.0$. Networks trained with various percentages of training data were also tested.

We also investigated the effects of using only a proportion of training data for training purpose. We trained the networks with various percentages of training data and tested them on entire test data. We randomly sample a percentage of training data at the start of training. We hypothesize that a robust neural network trained with only a portion of training data should be able to generalize well across the entire test dataset.

\begin{table*}[bt]
	\caption{Top-1 accuracies for models trained with $\sigma_{train}=0.25$ on CIFAR-10 dataset}
	\label{cifar10-table-results-025}
	\center
	\begin{tabular}{p{2cm} c c c c c c c c c c}
	\toprule
	\multicolumn{11}{c}{For ResNet-20 Architecture}	\\
	\midrule
	\multirow{3}{*}{Training}	& \multirow{3}{*}{$L_n$} & \multicolumn{9}{c}{Test Data Distortion $\sigma_{test}$} \\
	\cmidrule(l){3-11} & & \multicolumn{3}{c}{$\sigma_{test}=0.0$} & \multicolumn{3}{c}{$\sigma_{test}=0.3$} & \multicolumn{3}{c}{$\sigma_{test}=0.5$}	\\
	\cmidrule(l){3-5} \cmidrule(l){6-8} \cmidrule(l){9-11} & & $\beta=1$ & $\beta=5$ & $\beta=10$ & $\beta=1$ & $\beta=5$ & $\beta=10$ & $\beta=1$ & $\beta=5$ & $\beta=10$ \\
	\cmidrule(l){3-3} \cmidrule(l){4-4} \cmidrule(l){5-5} \cmidrule(l){6-6} \cmidrule(l){7-7} \cmidrule(l){8-8} \cmidrule(l){9-9} \cmidrule(l){10-10} \cmidrule(l){11-11} 
	standard & - & \textbf{92.77} & \textbf{92.77} & \textbf{92.77} & 38.28 & 38.28 & 38.28 & 18.01 & 18.01 & 18.01 \\
	\multirow{2}{*}{proposed} & 0.01 & 88.02	& 82.49	& 88.18	& 50.56	& 45.80	& \textbf{58.56}	& 23.49	& 21.68	& \textbf{29.70} \\ & 0.1 & 88.86	& 88.21	& 89.00	& \textbf{63.01}	& \textbf{59.05}	& 57.92	& \textbf{34.73}	& \textbf{31.17}	& 25.84 \\
	\midrule
	
	\multicolumn{11}{c}{For Preresnet-20 Architecture}	\\
	\midrule
	\multirow{3}{*}{Training}	& \multirow{3}{*}{$L_n$} & \multicolumn{9}{c}{Test Data Distortion $\sigma_{test}$} \\
	\cmidrule(l){3-11} & & \multicolumn{3}{c}{$\sigma_{test}=0.0$} & \multicolumn{3}{c}{$\sigma_{test}=0.3$} & \multicolumn{3}{c}{$\sigma_{test}=0.5$}	\\
	\cmidrule(l){3-5} \cmidrule(l){6-8} \cmidrule(l){9-11} & & $\beta=1$ & $\beta=5$ & $\beta=10$ & $\beta=1$ & $\beta=5$ & $\beta=10$ & $\beta=1$ & $\beta=5$ & $\beta=10$ \\
	\cmidrule(l){3-3} \cmidrule(l){4-4} \cmidrule(l){5-5} \cmidrule(l){6-6} \cmidrule(l){7-7} \cmidrule(l){8-8} \cmidrule(l){9-9} \cmidrule(l){10-10} \cmidrule(l){11-11} 
	standard & - & \textbf{92.59} & \textbf{92.59} & \textbf{92.59} & 30.80 & 30.80 & 30.80 & 15.91 & 15.91 & 15.91 \\
	\multirow{2}{*}{proposed} & 0.01 & 86.91	& 87.86	& 88.34	& \textbf{63.52}	& 58.86	& 59.47	& 32.84	& 26.99	& 26.33 \\ & 0.1 & 86.55	& 88.11	& 87.80	& 58.51	& \textbf{61.25}	& \textbf{64.98}	& \textbf{35.14}	& \textbf{32.76}	& \textbf{39.41} \\
	\bottomrule
	
	\end{tabular}
\end{table*}

\begin{table}[bt]
	\caption{Top-1 accuracies for models trained with $\sigma_{train}=0.05$ on CIFAR-10 dataset}
	\label{cifar10-table-results-005}
	\centering
	\begin{tabular}{p{2cm} c c c c c c c c c c}
	\toprule
	\multicolumn{11}{c}{For ResNet-20 Architecture}	\\
	\midrule
	\multirow{3}{*}{Training}	& \multirow{3}{*}{$L_n$} & \multicolumn{9}{c}{Test Data Distortion $\sigma_{test}$} \\
	\cmidrule(l){3-11} & & \multicolumn{3}{c}{$\sigma_{test}=0.0$} & \multicolumn{3}{c}{$\sigma_{test}=0.3$} & \multicolumn{3}{c}{$\sigma_{test}=0.5$}	\\
	\cmidrule(l){3-5} \cmidrule(l){6-8} \cmidrule(l){9-11} & & $\beta=1$ & $\beta=5$ & $\beta=10$ & $\beta=1$ & $\beta=5$ & $\beta=10$ & $\beta=1$ & $\beta=5$ & $\beta=10$ \\
	\cmidrule(l){3-3} \cmidrule(l){4-4} \cmidrule(l){5-5} \cmidrule(l){6-6} \cmidrule(l){7-7} \cmidrule(l){8-8} \cmidrule(l){9-9} \cmidrule(l){10-10} \cmidrule(l){11-11} 
	standard & - & \textbf{92.77} & \textbf{92.77} & \textbf{92.77} & \textbf{38.28} & 38.28 & 38.28 & \textbf{18.01} & 18.01 & 18.01 \\
	\multirow{2}{*}{proposed} & 0.01 & 92.39 & 92.44	& 92.68	& 32.5 & 36.56	& 34.09	& 15.68	& 19.65	& 16.33 \\ & 0.1 & 92.57	& 92.32	& \textbf{93.02}	& 34.15	& \textbf{38.48}	& \textbf{42.06}	& 13.67	& \textbf{20.63}	& \textbf{21.56} \\
	\midrule
	
	\multicolumn{11}{c}{For Preresnet-20 Architecture}	\\
	\midrule
	\multirow{3}{*}{Training}	& \multirow{3}{*}{$L_n$} & \multicolumn{9}{c}{Test Data Distortion $\sigma_{test}$} \\
	\cmidrule(l){3-11} & & \multicolumn{3}{c}{$\sigma_{test}=0.0$} & \multicolumn{3}{c}{$\sigma_{test}=0.3$} & \multicolumn{3}{c}{$\sigma_{test}=0.5$}	\\
	\cmidrule(l){3-5} \cmidrule(l){6-8} \cmidrule(l){9-11} & & $\beta=1$ & $\beta=5$ & $\beta=10$ & $\beta=1$ & $\beta=5$ & $\beta=10$ & $\beta=1$ & $\beta=5$ & $\beta=10$ \\
	\cmidrule(l){3-3} \cmidrule(l){4-4} \cmidrule(l){5-5} \cmidrule(l){6-6} \cmidrule(l){7-7} \cmidrule(l){8-8} \cmidrule(l){9-9} \cmidrule(l){10-10} \cmidrule(l){11-11} 
	standard & - & \textbf{92.59} & 92.59 & \textbf{92.59} & 30.80 & 30.80 & 30.80 & 15.91 & 15.91 & 15.91 \\
	\multirow{2}{*}{proposed} & 0.01 & 92.50	& 92.40	& 92.56	& \textbf{41.31}	& \textbf{35.18}	& \textbf{34.66}	& \textbf{22.25}	& \textbf{18.39}	& \textbf{16.46} \\ & 0.1 & 92.36	& \textbf{92.66}	& 92.34	& 34.92	& 28.37	& 33.67	& 17.54	& 16.12	& 14.83 \\
	\bottomrule
	
	\end{tabular}
\end{table}

\begin{table}[bt]
	\caption{Top-1 accuracies for models trained with $\sigma_{train}=0.5$ on CIFAR-10 dataset}
	\label{cifar10-table-results-05}
	\center
	\begin{tabular}{p{2cm} c c c c c c c c c c}
	\toprule
	\multicolumn{11}{c}{For ResNet-20 Architecture}	\\
	\midrule
	\multirow{3}{*}{Training}	& \multirow{3}{*}{$L_n$} & \multicolumn{9}{c}{Test Data Distortion $\sigma_{test}$} \\
	\cmidrule(l){3-11} & & \multicolumn{3}{c}{$\sigma_{test}=0.0$} & \multicolumn{3}{c}{$\sigma_{test}=0.3$} & \multicolumn{3}{c}{$\sigma_{test}=0.5$}	\\
	\cmidrule(l){3-5} \cmidrule(l){6-8} \cmidrule(l){9-11} & & $\beta=1$ & $\beta=5$ & $\beta=10$ & $\beta=1$ & $\beta=5$ & $\beta=10$ & $\beta=1$ & $\beta=5$ & $\beta=10$ \\
	\cmidrule(l){3-3} \cmidrule(l){4-4} \cmidrule(l){5-5} \cmidrule(l){6-6} \cmidrule(l){7-7} \cmidrule(l){8-8} \cmidrule(l){9-9} \cmidrule(l){10-10} \cmidrule(l){11-11} 
	standard & - & \textbf{92.77} & \textbf{92.77} & \textbf{92.77} & 38.28 & 38.28 & 38.28 & 18.01 & 18.01 & 18.01 \\
	\multirow{2}{*}{proposed} & 0.01 & 82.97	& 82.72	& 82.55	& \textbf{70.31}	& \textbf{67.60}	& 66.89	& \textbf{46.47}	& \textbf{43.91}	& 41.15 \\ & 0.1 & 81.36	& 83.20	& 84.62	& 58.55	& 60.09	& \textbf{72.40}	& 36.25	& 40.75	& \textbf{45.53} \\
	\midrule
	
	\multicolumn{11}{c}{For Preresnet-20 Architecture}	\\
	\midrule
	\multirow{3}{*}{Training}	& \multirow{3}{*}{$L_n$} & \multicolumn{9}{c}{Test Data Distortion $\sigma_{test}$} \\
	\cmidrule(l){3-11} & & \multicolumn{3}{c}{$\sigma_{test}=0.0$} & \multicolumn{3}{c}{$\sigma_{test}=0.3$} & \multicolumn{3}{c}{$\sigma_{test}=0.5$}	\\
	\cmidrule(l){3-5} \cmidrule(l){6-8} \cmidrule(l){9-11} & & $\beta=1$ & $\beta=5$ & $\beta=10$ & $\beta=1$ & $\beta=5$ & $\beta=10$ & $\beta=1$ & $\beta=5$ & $\beta=10$ \\
	\cmidrule(l){3-3} \cmidrule(l){4-4} \cmidrule(l){5-5} \cmidrule(l){6-6} \cmidrule(l){7-7} \cmidrule(l){8-8} \cmidrule(l){9-9} \cmidrule(l){10-10} \cmidrule(l){11-11} 
	standard & - & \textbf{92.59} & \textbf{92.59} & \textbf{92.59} & 30.80 & 30.80 & 30.80 & 15.91 & 15.91 & 15.91 \\
	\multirow{2}{*}{proposed} & 0.01 & 82.43	& 80.88	& 80.00	& \textbf{70.83} & \textbf{64.34}	& \textbf{72.08}	& \textbf{45.54}	& \textbf{33.10}	& 47.19 \\ & 0.1 & 80.08	& 85.17	& 82.42	& 53.92	& 58.37	& 65.94	& 31.77	& 26.48	& \textbf{47.28} \\
	\bottomrule
	
	\end{tabular}
\end{table}

\subsubsection{Results}
\label{results}
Table \ref{mnist-results-table} presents classification accuracies for models trained with different combinations of hyperparameters. We see that networks trained with Lipschitz continuity loss perform better than the network obtained with standard training procedure. With undistorted test data, the gain in performance is small but as the severity of distortion increases, the networks trained with proposed method show significant performance improvement over network trained with standard training process. As the value of $ L_n $ is increased keeping other hyperparameters the same, the performance slightly deteriorates in accordance with the conclusion of Theorem 1, where the region of admissible distortions $ d $ decreases as $ L_n $ is increased i.e. $\|d\| \leq \rho / L_n$.

In order to test the robustness of proposed training procedure, we trained the networks with various portions of training data. These models were then tested with entire test dataset, undistorted as well as distorted ($\sigma_{test}=0.5$) . Figure \ref{mnist-figure-ratios} shows that networks trained with Lipschitz loss always perform better than those trained with standard training process, thus proving their robustness.
\subsection{CIFAR-10}
\label{cifar}
\subsubsection{Experiment Details}
\label{cifar-experiment-details}
We used ResNet-20 [4] and PreResNet-20 [17] as our network architectures for classification task with CIFAR-10 dataset. Both networks have 16-16-32-64 channels and 0.26 million parameters each. Each model was trained for 300 epochs with batch size of 128 and learning rate of 0.1. Learning rate was decreased by a factor of 10 first at epoch 150 and then at epoch 225. $\left(\sigma_{train},\beta,L_{n}\right) \in \{0.05,0.25,0.5\} \times \{1,5,10\} \times \{0.001,0.1\}$ were used as hyperparameters. For training the network with standard training method, we set $\beta=0$ in \eqref{eq:12}.

We tested the trained networks with corrupted test data generated by distorting the test data set with zero mean Gaussian Noise having standard deviation values ranging from $\sigma_{test}=0.0$  to $\sigma_{test}=0.5$ with step size of $0.01$.
\subsubsection{Results}
\label{cifar-results}
Table \ref{cifar10-table-results-025} shows the top-$1$ classification accuracies for networks trained with $\sigma_{train}=0.25$ and $\sigma_{test}$ values of $0.0, 0.3$ and $0.5$. Similarly, Table \ref{cifar10-table-results-005} and Table \ref{cifar10-table-results-05} show results in similar fashion for $\sigma_{train}=0.05$ and $\sigma_{train}=0.5$  respectively. Figure \ref{cifar_figure_005} , \ref{cifar_figure_025} and \ref{cifar_figure_05} show plots for test accuracies versus $\sigma_{test} = 0.0-0.5$ for networks trained with $\sigma_{train}=0.05, 0.25, 0.5$ respectively for better visualization.

We see that the models trained with $\sigma_{train}=0.05$ perform comparable to the original baseline with the undistorted test data. As the distortion severity is increased, they perform better than the baseline confirming that they are more robust to input visual distortions. As the value of $\sigma_{train}$ is increased, we get the models that tend to lose performance with the undistorted dataset but perform much better as the distortion severity in increased. Therefore, models trained with increased values of $\sigma_{train}$ are much more robust and insensitive to input distortions with some loss in performance with undistorted input data. We also note that as the value of $\beta$ is increased, the performance difference of models trained with different $L_n$ values tends to diminish as they start to performance equally well. This is due to high value of $\beta$ that makes the effect of different $L_n$ values in the training loss ineffective.

\begin{table*}[bt]
	\caption{Top-1 accuracies for models trained with $\sigma_{train}=0.25$ on STL-10 dataset}
	\label{stl10-table-results-025}
	\center
	\begin{tabular}{p{2cm} c c c c c c c c c c}
	\toprule
	\multirow{3}[5]{*}{Training}	& \multirow{3}[5]{*}{$L_n$} & \multicolumn{9}{c}{Test Data Distortion $\sigma_{test}$} \\
	\cmidrule(l){3-11} & & \multicolumn{3}{c}{$\sigma_{test}=0.0$} & \multicolumn{3}{c}{$\sigma_{test}=0.3$} & \multicolumn{3}{c}{$\sigma_{test}=0.5$}	\\
	\cmidrule(l){3-5} \cmidrule(l){6-8} \cmidrule(l){9-11} & & $\beta=1$ & $\beta=5$ & $\beta=10$ & $\beta=1$ & $\beta=5$ & $\beta=10$ & $\beta=1$ & $\beta=5$ & $\beta=10$ \\
	\cmidrule(l){1-1} \cmidrule(l){2-2} \cmidrule(l){3-3} \cmidrule(l){4-4} \cmidrule(l){5-5} \cmidrule(l){6-6} \cmidrule(l){7-7} \cmidrule(l){8-8} \cmidrule(l){9-9} \cmidrule(l){10-10} \cmidrule(l){11-11} 
	standard & - & \textbf{80.44}	& \textbf{80.44}	& \textbf{80.44	} & 50.67	& 50.67	& 50.67	& 34.94	& 34.94	& 34.94 \\
	\multirow{2}{*}{proposed} & 0.01 & 75.65	& 77.41	& 78.19	& 66.11	& 59.00	& 62.91	& \textbf{48.77}	& 40.94	& \textbf{44.52} \\ & 0.1 & 78.88	& 79.71	& 77.21	& \textbf{68.34}	& \textbf{65.47}	& \textbf{64.60}	& 47.42	& \textbf{46.26}	& 42.24 \\
	\bottomrule
	\end{tabular}
\end{table*}

\begin{table*}[bt]
	\caption{Top-1 accuracies for models trained with $\sigma_{train}=0.05$ on STL-10 dataset}
	\label{stl10-table-results-005}
	\center
	\begin{tabular}{p{2cm} c c c c c c c c c c}
	\toprule
	\multirow{3}[5]{*}{Training}	& \multirow{3}[5]{*}{$L_n$} & \multicolumn{9}{c}{Test Data Distortion $\sigma_{test}$} \\
	\cmidrule(l){3-11} & & \multicolumn{3}{c}{$\sigma_{test}=0.0$} & \multicolumn{3}{c}{$\sigma_{test}=0.3$} & \multicolumn{3}{c}{$\sigma_{test}=0.5$}	\\
	\cmidrule(l){3-5} \cmidrule(l){6-8} \cmidrule(l){9-11} & & $\beta=1$ & $\beta=5$ & $\beta=10$ & $\beta=1$ & $\beta=5$ & $\beta=10$ & $\beta=1$ & $\beta=5$ & $\beta=10$ \\
	\cmidrule(l){1-1} \cmidrule(l){2-2} \cmidrule(l){3-3} \cmidrule(l){4-4} \cmidrule(l){5-5} \cmidrule(l){6-6} \cmidrule(l){7-7} \cmidrule(l){8-8} \cmidrule(l){9-9} \cmidrule(l){10-10} \cmidrule(l){11-11} 
	standard & - & 80.44	& 80.44	& 80.44	& 50.67	& 50.67	& 50.67	& 34.94	& 34.94	& 34.94 \\
	\multirow{2}{*}{proposed} & 0.01 & 80.44	& 80.88	& \textbf{80.79}	& \textbf{56.54}	& \textbf{67.81}	& \textbf{56.56}	& \textbf{41.74}	& \textbf{51.58}	& \textbf{37.25} \\ & 0.1 & \textbf{81.34}	& \textbf{80.90}	& 80.64	& 55.46	& 52.88	& 49.86	& 39.51	& 39.51	& 34.84 \\

	\bottomrule
	
	\end{tabular}
\end{table*}

\begin{table*}[t]
	\caption{Top-1 accuracies for models trained with $\sigma_{train}=0.5$ on STL-10 dataset}
	\label{stl10-table-results-05}
	\center
	\begin{tabular}{p{2cm} c c c c c c c c c c}
	\toprule
	\multirow{3}[5]{*}{Training}	& \multirow{3}[5]{*}{$L_n$} & \multicolumn{9}{c}{Test Data Distortion $\sigma_{test}$} \\
	\cmidrule(l){3-11} & & \multicolumn{3}{c}{$\sigma_{test}=0.0$} & \multicolumn{3}{c}{$\sigma_{test}=0.3$} & \multicolumn{3}{c}{$\sigma_{test}=0.5$}	\\
	\cmidrule(l){3-5} \cmidrule(l){6-8} \cmidrule(l){9-11} & & $\beta=1$ & $\beta=5$ & $\beta=10$ & $\beta=1$ & $\beta=5$ & $\beta=10$ & $\beta=1$ & $\beta=5$ & $\beta=10$ \\
	\cmidrule(l){1-1} \cmidrule(l){2-2} \cmidrule(l){3-3} \cmidrule(l){4-4} \cmidrule(l){5-5} \cmidrule(l){6-6} \cmidrule(l){7-7} \cmidrule(l){8-8} \cmidrule(l){9-9} \cmidrule(l){10-10} \cmidrule(l){11-11} 
	standard & - & \textbf{80.44} & \textbf{80.44}	& \textbf{80.44}	& 50.67	& 50.67	& 50.67	& 34.94	& 34.94	& 34.94 \\
	\multirow{2}{*}{proposed} & 0.01 & 73.16	& 72.08	& 70.90	& 60.19	& 66.97	& 65.89	& \textbf{48.83}	& 53.21	& 51.33 \\ & 0.1 & 71.75	& 75.35	& 71.83	& \textbf{62.79}	& \textbf{69.10}	& \textbf{67.88}	& 46.40	& \textbf{55.95}	& \textbf{52.21} \\

	\bottomrule
	
	\end{tabular}
\end{table*}

\subsection{STL-10}
\subsubsection{Experiment Details}
We used PreResNet-32 \cite{2} as our baseline architecture for classification task with STL-10 dataset. The network has 16-16-32-64 channels and 0.46 million parameters. Training conditions and hyperparameters' values are same as for CIFAR-10 experiments. Test data was also generated similar to CIFAR-10 experiments.

\subsubsection{Results}
Table \ref{stl10-table-results-025} shows the top-$1$ classification accuracies for networks trained with $\sigma_{train}=0.25$ and $\sigma_{test}$ values of $0.0$, $0.3$ and $0.5$. Similarly, Table \ref{stl10-table-results-005} and Table \ref{stl10-table-results-05} show results in similar fashion for networks trained with $\sigma_{train}=0.05$ and $\sigma_{train}=0.5$  respectively. Figure \ref{stl_figure_005} , \ref{stl_figure_025} and \ref{stl_figure_05} in the supplementary material show plots for test accuracies versus $\sigma_{test} = 0.0-0.5$ for networks trained with $\sigma_{train}=0.05, 0.25, 0.5$ respectively for better visualization.

We see that the models trained with $\sigma_{train}=0.05$ perform comparable to the original baseline with the undistorted test data. As the distortion severity is increased, they perform better than the baseline confirming that they are more robust to input visual distortions. As the value of $\sigma_{train}$ is increased, we get the models that tend to lose performance with the undistorted dataset but perform much better as the distortion severity in increased. Therefore, models trained with increased values of $\sigma_{train}$ are much more robust and insensitive to input distortions with some loss in performance with undistorted input data. We also note that as the value of $\beta$ is increased, the performance difference of models trained with different $L_n$ values tends to diminish as they start to performance equally well. This is due to high value of $\beta$ that makes the effect of different $L_n$ values in the training loss ineffective.

\section{Sensitivity Analysis of Hyperparameters}
\label{Secsitivity_Analysis}
The impact of hyperparameters is best studied using the sensitivity analysis. The hyperparameters introduced in this study are $\left(\sigma_{train},\beta,L_{n}\right) \in \{0.05,0.25,0.5\} \times \{1,5,10\} \times \{0.01,0.1\}$. For sensitivity analysis, let's take nominal values of hyperparameters be $\left(\sigma_{train},\beta,L_{n}\right) = \left(0.25,5,0.01\right) $. Let \textit{acc} denote the percentage accuracy of the model trained with Lipschitz term in loss function. We change the hyperparameters $ \sigma_{train},\beta $ and $ L_{n} $ as follows: $ \Delta\sigma_{train} = 0.25, \Delta \beta = 5 $ and $ \Delta L_{n} = 0.09 $. Experiments are performed with new hyperparameters values on CIFAR-10 dataset. The sensitivities of model performance with respect to $ \sigma_{train},\beta $ and $ L_{n} $ are given as: 
\[
\Delta acc/\Delta\sigma_{train} = 87.20, \quad \Delta acc/\Delta\beta = 2.55
\]
and
\[
  \Delta acc/\Delta L_n = -28.89
\]  
respectively. 

We see that the network performance is most sensitive to change in $\sigma_{train}$. Performance is least sensitive to change in $\beta$.  Performance is fairly sensitive to change in $L_n$ where the negative value of $\Delta acc/\Delta L_n$ indicates that the performance deteriorates as $L_n$ increases, which is consistent with the conclusion of Theorem 1 in Section \ref{approach} where the radius of admissible distortions $d$ is inversely proportional to the  magnitude of $L_n$ i.e. $\|d\| \leq \rho / L_n$.

\section{Conclusion}
In this paper, we presented a method for training neural networks using Lipschitz continuity that can be used to make them more robust to input visual perturbations. We provide theoretical justification and experimental demonstration about the effectiveness of our method using existing neural network architectures in the presence of input perturbations. Our approach is, therefore, easy-to-use and effective as it improves the network robustness and stability without using data augmentation or additional training data.

\section{Acknowledgement}
This research has been in part supported by the ICT R\&D program of MSIP/IITP [2016-0-00563, Research on Adaptive Machine Learning Technology Development for Intelligent Autonomous Digital Companion].


\begin{figure}[ht]
\centering
\subfloat[Resnet-20 trained with $\sigma_{train}=0.05$]{\includegraphics[scale=0.4]{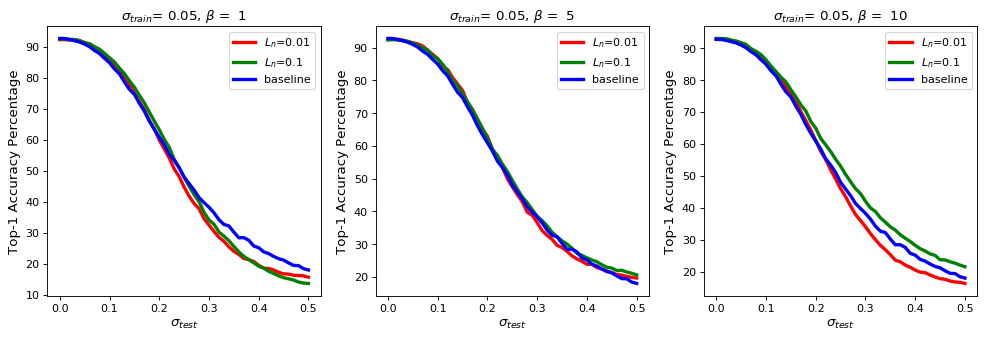}}

\subfloat[Preresnet-20 trained with $\sigma_{train}=0.05$]{\includegraphics[scale=0.4]{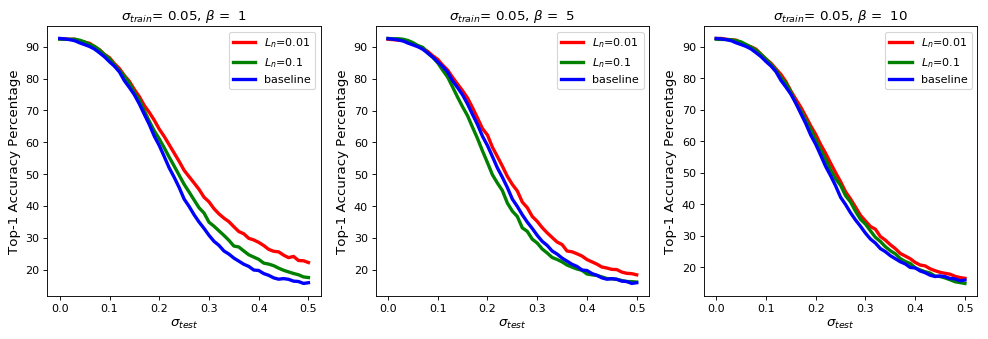}}
\caption{Plots of the top-1 CIFAR-10 test accuracies for models trained with $\sigma_{train}=0.05$ and with standard training. (a) shows results for resnet-20 and (b) shows results for preresnet-20.}
\label{cifar_figure_005}
\end{figure}
\begin{figure}[ht]
\centering
\includegraphics[width=5in]{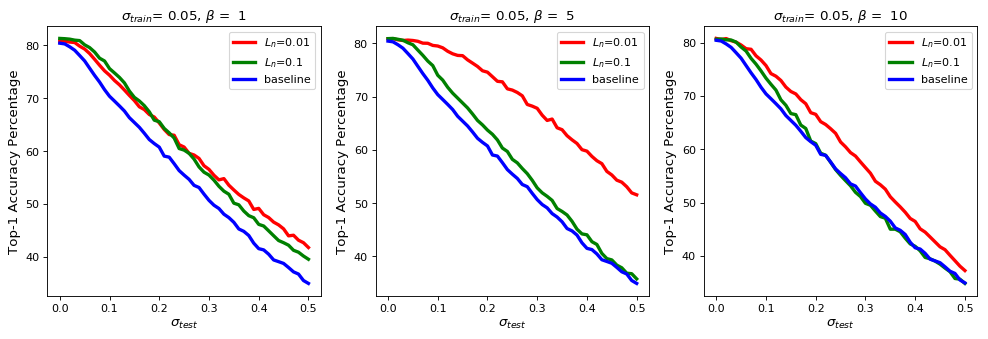}
\caption{Plots of top-1 STL-10 test accuracies for $\sigma_{train}=0.05$}
\label{stl_figure_005}
\end{figure}

\begin{figure}[ht]
\centering
\subfloat[Resnet-20 trained with $\sigma_{train}=0.25$]{\includegraphics[scale=0.4]{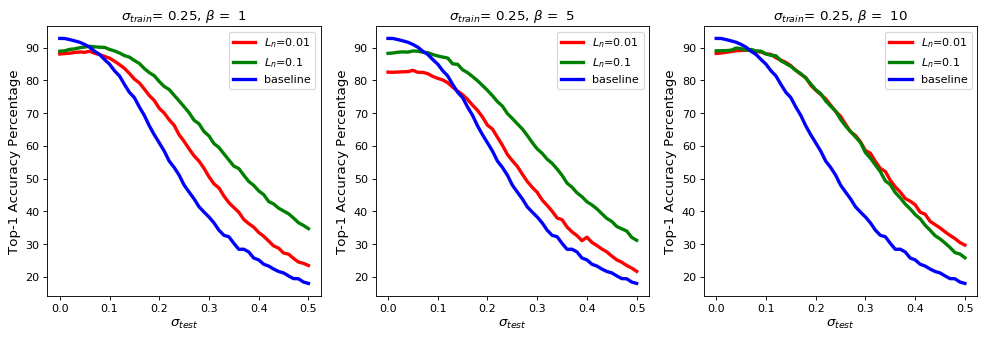}}

\subfloat[Preresnet-20 trained with $\sigma_{train}=0.25$]{\includegraphics[scale=0.4]{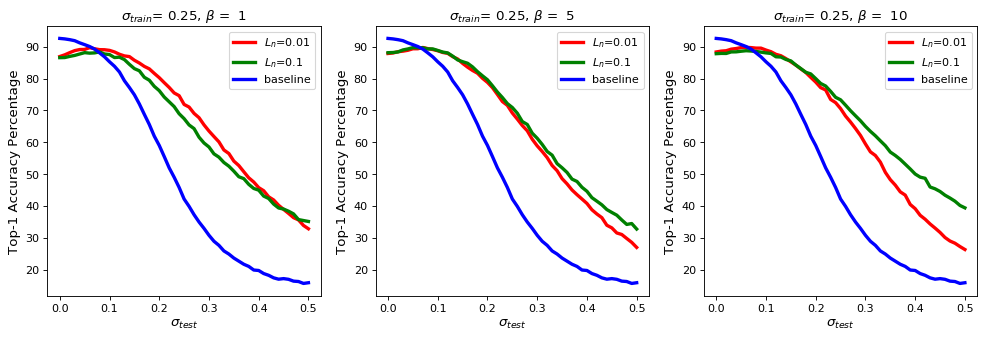}}
\caption{Plots of the top-1 CIFAR-10 test accuracies for $\sigma_{train}=0.25$}
\label{cifar_figure_025}
\end{figure}

\begin{figure}[ht]
\centering
\includegraphics[width=5in]{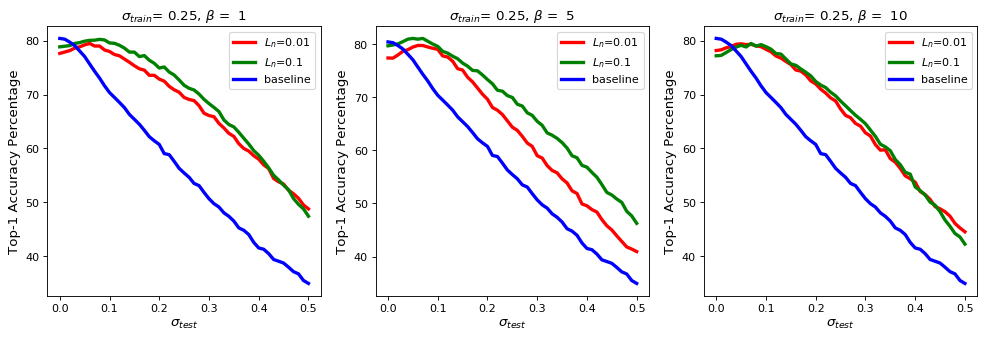}
\caption{Plots of the top-1 STL-10 test accuracies for $\sigma_{train}=0.25$}
\label{stl_figure_025}
\end{figure}

\begin{figure}[ht]
\centering
\subfloat[Resnet-20 trained with $\sigma_{train}=0.5$]{\includegraphics[scale=0.4]{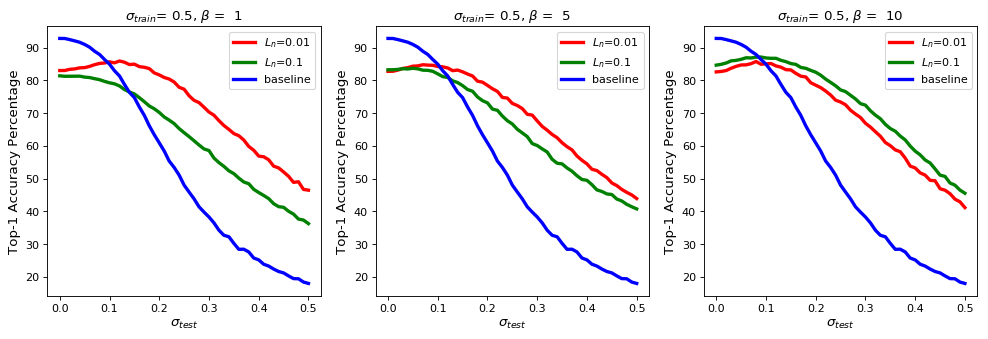}}

\subfloat[Preresnet-20 trained with $\sigma_{train}=0.5$]{\includegraphics[scale=0.4]{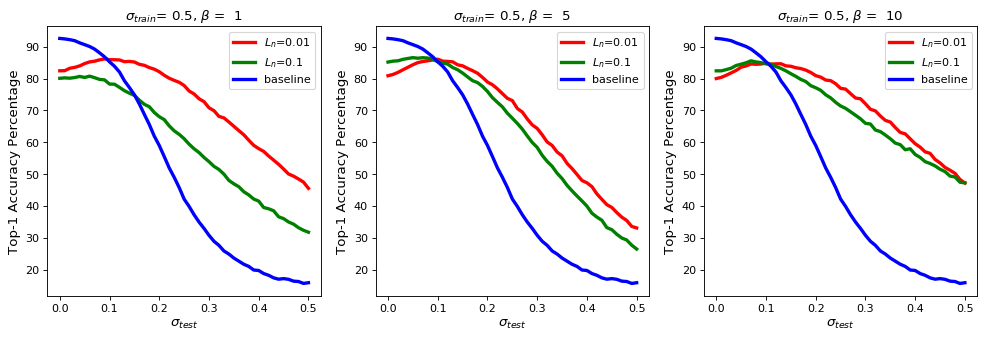}}
\caption{Plots of the top-1 CIFAR-10 test accuracies for $\sigma_{train}=0.5$}
\label{cifar_figure_05}
\end{figure}

\begin{figure}[ht]
\centering
\includegraphics[width=5in]{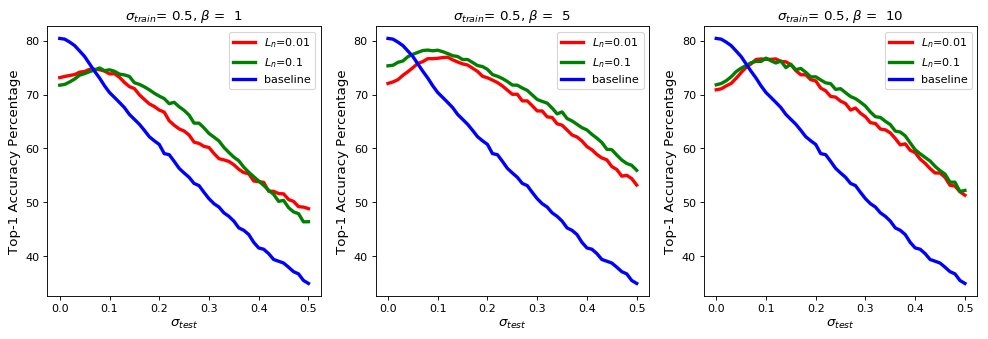}
\caption{Plots of the top-1 STL-10 test accuracies for $\sigma_{train}=0.5$}
\label{stl_figure_05}
\end{figure}


\clearpage
\bibliographystyle{unsrt}
\bibliography{references}

\begin{thebibliography}{10}

\bibitem{2}
Christian Szegedy, Wei Liu, Yangqing Jia, Pierre Sermanet, Scott Reed, Dragomir
  Anguelov, Dumitru Erhan, Vincent Vanhoucke, and Andrew Rabinovich.
\newblock Going deeper with convolutions.
\newblock In {\em CVPR}, June 2015.

\bibitem{3}
Fei Wang, Mengqing Jiang, Chen Qian, Shuo Yang, Cheng Li, Honggang Zhang,
  Xiaogang Wang, and Xiaoou Tang.
\newblock Residual attention network for image classification.
\newblock {\em CoRR}, abs/1704.06904, 2017.

\bibitem{4}
Kaiming He, Xiangyu Zhang, Shaoqing Ren, and Jian Sun.
\newblock Deep residual learning for image recognition.
\newblock In {\em CVPR}, June 2016.

\bibitem{22}
Alex Krizhevsky, Ilya Sutskever, and Geoffrey~E Hinton.
\newblock Imagenet classification with deep convolutional neural networks.
\newblock In F.~Pereira, C.~J.~C. Burges, L.~Bottou, and K.~Q. Weinberger,
  editors, {\em NIPS 2012}, pages 1097--1105. Curran Associates, Inc., 2012.

\bibitem{5}
Joseph Redmon, Santosh Divvala, Ross Girshick, and Ali Farhadi.
\newblock You only look once: Unified, real-time object detection.
\newblock In {\em CVPR}, June 2016.

\bibitem{6}
Shaoqing Ren, Kaiming He, Ross Girshick, and Jian Sun.
\newblock Faster r-cnn: Towards real-time object detection with region proposal
  networks.
\newblock In C.~Cortes, N.~D. Lawrence, D.~D. Lee, M.~Sugiyama, and R.~Garnett,
  editors, {\em NIPS 2015}, pages 91--99. Curran Associates, Inc., 2015.

\bibitem{31}
Oriol Vinyals, Alexander Toshev, Samy Bengio, and Dumitru Erhan.
\newblock Show and tell: {A} neural image caption generator.
\newblock {\em CoRR}, abs/1411.4555, 2014.

\bibitem{8}
Volodymyr Mnih et~al.
\newblock Human-level control through deep reinforcement learning.
\newblock {\em Nature}, 518(7540):529--533, February 2015.

\bibitem{10}
David Silver et~al.
\newblock Mastering the game of go without human knowledge.
\newblock {\em Nature}, 550:354--, October 2017.

\bibitem{mathematical_framework}
Anthony Caterini and Dong~Eui Chang.
\newblock {\em Deep Neural Networks in a Mathematical Framework}.
\newblock Springer International Publishing, 2018.

\bibitem{15}
J.~Deng, W.~Dong, R.~Socher, L.~Li, Kai Li, and Li~Fei-Fei.
\newblock Imagenet: A large-scale hierarchical image database.
\newblock In {\em CVPR}, pages 248--255, June 2009.

\bibitem{16}
Tsung-Yi Lin, Michael Maire, Serge Belongie, James Hays, Pietro Perona, Deva
  Ramanan, Piotr Doll{\'a}r, and C.~Lawrence Zitnick.
\newblock Microsoft coco: Common objects in context.
\newblock In David Fleet, Tomas Pajdla, Bernt Schiele, and Tinne Tuytelaars,
  editors, {\em Computer Vision -- ECCV 2014}, pages 740--755, Cham, 2014.
  Springer International Publishing.

\bibitem{17}
Kaiming He, Xiangyu Zhang, Shaoqing Ren, and Jian Sun.
\newblock Identity mappings in deep residual networks.
\newblock {\em CoRR}, abs/1603.05027, 2016.

\bibitem{13}
Adam Paszke, Sam Gross, Soumith Chintala, Gregory Chanan, Edward Yang, Zachary
  DeVito, Zeming Lin, Alban Desmaison, Luca Antiga, and Adam Lerer.
\newblock Automatic differentiation in pytorch.
\newblock 2017.

\bibitem{14}
Mart\'{\i}n Abadi et~al.
\newblock {TensorFlow}: Large-scale machine learning on heterogeneous systems,
  2015.
\newblock Software available from tensorflow.org.

\bibitem{18}
Samuel~F. Dodge and Lina~J. Karam.
\newblock Understanding how image quality affects deep neural networks.
\newblock {\em CoRR}, abs/1604.04004, 2016.

\bibitem{19}
Igor Vasiljevic, Ayan Chakrabarti, and Gregory Shakhnarovich.
\newblock Examining the impact of blur on recognition by convolutional
  networks.
\newblock {\em CoRR}, abs/1611.05760, 2016.

\bibitem{21}
Karen Simonyan and Andrew Zisserman.
\newblock Very deep convolutional networks for large-scale image recognition.
\newblock {\em CoRR}, abs/1409.1556, 2014.

\bibitem{26}
Andras Rozsa, Manuel G{\"{u}}nther, and Terrance~E. Boult.
\newblock Towards robust deep neural networks with {BANG}.
\newblock {\em CoRR}, abs/1612.00138, 2016.

\bibitem{27}
Stephan Zheng, Yang Song, Thomas Leung, and Ian~J. Goodfellow.
\newblock Improving the robustness of deep neural networks via stability
  training.
\newblock {\em CoRR}, abs/1604.04326, 2016.

\bibitem{25}
Yiren Zhou, Sibo Song, and Ngai{-}Man Cheung.
\newblock On classification of distorted images with deep convolutional neural
  networks.
\newblock {\em CoRR}, abs/1701.01924, 2017.

\bibitem{28}
Samuel~F. Dodge and Lina~J. Karam.
\newblock Quality resilient deep neural networks.
\newblock {\em CoRR}, abs/1703.08119, 2017.

\bibitem{29}
Christian Szegedy, Wojciech Zaremba, Ilya Sutskever, Joan Bruna, Dumitru Erhan,
  Ian~J. Goodfellow, and Rob Fergus.
\newblock Intriguing properties of neural networks.
\newblock {\em CoRR}, abs/1312.6199, 2013.

\bibitem{30}
Ian Goodfellow, Jonathon Shlens, and Christian Szegedy.
\newblock Explaining and harnessing adversarial examples.
\newblock In {\em International Conference on Learning Representations}, 2015.

\bibitem{20}
Yann LeCun and Corinna Cortes.
\newblock {MNIST} handwritten digit database.
\newblock 2010.

\bibitem{23}
Alex Krizhevsky.
\newblock Learning multiple layers of features from tiny images.
\newblock Technical report, 2009.

\bibitem{24}
Adam Coates, Andrew Ng, and Honglak Lee.
\newblock An analysis of single-layer networks in unsupervised feature
  learning.
\newblock In Geoffrey Gordon, David Dunson, and Miroslav Dudík, editors, {\em
  Proceedings of the Fourteenth International Conference on Artificial
  Intelligence and Statistics}, volume~15 of {\em Proceedings of Machine
  Learning Research}, pages 215--223, Fort Lauderdale, FL, USA, 11--13 Apr
  2011. PMLR.

\end{thebibliography}

\end{document}